\newtheorem{theorem}{Theorem}
\theoremstyle{definition}
\newtheorem{definition}{Definition}
\theoremstyle{proposition}
\newtheorem{proposition}{Proposition}
\theoremstyle{corollary}
\newtheorem{corollary}{Corollary}
\crefname{algorithm}{Alg.}{Algs.}
\Crefname{algorithm}{Algorithm}{Algorithms}
\tikzstyle{line} = [draw, -latex']
\tikzstyle{hidden} = [ellipse, draw, text centered, inner sep=1pt]
\tikzstyle{obs} = [ellipse, draw, fill=gray!60, text centered, inner sep=1pt]
\tikzstyle{rv} = [draw, ellipse, inner sep=2pt]
\tikzstyle{pf} = [draw, rectangle, fill=gray]
\tikzstyle{pc} = [draw, rounded corners=15pt, align=center, minimum width=18mm, font=\normalsize, inner sep=3pt]
\tikzstyle{pc2} = [draw, rounded corners=8pt,align=center,minimum height=6mm,font=\normalsize,inner sep=2pt]
\tikzstyle{nhidden} = [draw=none,fill=none,ellipse, text centered, inner sep=1pt]
\tikzstyle{nobs} = [draw=none,fill=none,ellipse, fill=gray!60, text centered, inner sep=1pt]
\tikzstyle{nrv} = [draw=none,fill=none,ellipse, inner sep=2pt]
\tikzstyle{npf} = [draw=none,fill=none,rectangle]
\tikzstyle{ID} = [draw, circle, font=\normalsize]
\tikzstyle{INN} = [draw, circle, inner sep=1pt, fill=black]
\newcommand\pfs[8]{
  \node[pf, #1 of=#2, node distance=#3, xshift=-1mm, yshift=1mm](#6) {};
  \node[pf, #1 of=#2, node distance=#3, label=#4:{#5}](#7) {};
  \node[pf, #1 of=#2, node distance=#3, xshift=1mm, yshift=-1mm](#8) {};
}
\title{Taming Reasoning in Temporal Probabilistic Relational Models}
\author{Marcel Gehrke, Ralf Möller, and Tanya Braun\\
Institute of Information Systems, University of Lübeck, Lübeck  \\
\{gehrke, braun, moeller\}@ifis.uni-luebeck.de}
\begin{document}

\maketitle
\acrodef{SHR}[SHR]{Standard Health Record}
\acrodef{ehr}[EHR]{electronic health record}
\acrodef{FHIM}[FHIM]{Federal Health Information Model}
\acrodef{OHDSI}[OHDSI]{OMOP Common Data Model, the Observational Health Data Sciences and Informatics}
\acrodef{FHIR}[FHIR]{HL7’s FAST Healthcare Interoperability Resources}

\acrodef{jtree}[jtree]{junction tree}
\acrodef{plms}[PLMs]{probabilistic logical models}

\acrodef{pdb}[PDB]{probabilistic database}

\acrodef{pf}[parfactor]{parametric factor}
\acrodef{lv}[logvar]{logical variable}
\acrodef{rv}[randvar]{random variable}
\acrodef{crv}[CRV]{counting randvar}
\acrodef{prv}[PRV]{parameterised randvar}

\acrodef{fodt}[FO dtree]{first-order decomposition tree}

\acrodef{fojt}[FO jtree]{first-order junction tree}
\acrodef{ljt}[LJT]{lifted junction tree algorithm}
\acrodef{ldjt}[LDJT]{lifted dynamic junction tree algorithm}
\acrodef{lve}[LVE]{lifted variable elimination}
\acrodef{ve}[VE]{variable elimination}

\acrodef{tam}[TAMe]{temporal approximate merging}

\acrodef{2tpm}[2TPM]{two-slice temporal parameterised model}
\acrodef{2tbn}[2TBN]{two-slice temporal bayesian network}

\acrodef{pm}[PM]{parameterised probabilistic model}
\acrodef{pdecm}[PDecM]{parameterised probabilistic decision model}

\acrodef{pdm}[PDM]{parameterised probabilistic dynamic model}
\acrodef{pddecm}[PDDecM]{parameterised probabilistic dynamic decision model}

\acrodef{dbn}[DBN]{dynamic Bayesian network}
\acrodef{bn}[BN]{Bayesian network}

\acrodef{dfg}[DFG]{dynamic factor graph}

\acrodef{dmln}[DMLN]{dynamic Markov logic network}
\acrodef{mln}[MLN]{Markov logic network}

\acrodef{rdbn}[RDBN]{relational dynamic Bayesian network}

\acrodef{meu}[MEU]{maximum expected utility}
\acrodef{mldn}[MLDN]{Markov logic decision network}

\acrodef{cep}[CEP]{complex event processing}

\acrodef{krp}[KRP]{keeping reasoning polynomial}

\begin{abstract}
	Evidence often grounds temporal probabilistic relational models over time, which makes reasoning infeasible.
	To counteract groundings over time and to keep reasoning polynomial by restoring a lifted representation, we present \ac{tam}, which incorporates 
	\begin{enumerate*}
		\item clustering for grouping submodels as well as
		\item statistical significance checks to test the fitness of the clustering outcome.
	\end{enumerate*} 
	In exchange for faster runtimes, \ac{tam} introduces a bounded error that becomes negligible over time.
	Empirical results show that \ac{tam} significantly improves the runtime performance of inference, while keeping errors small.
\end{abstract}
\acresetall	

\section{Introduction}\label{sec:intro}
Temporal probabilistic relational models express relations between objects, modelling uncertainty as well as temporal aspects.
Within one time step, a temporal model is considered static.
When time advances, the current model state transitions to a new state.
Performing inference on such models requires algorithms to efficiently handle the temporal aspect to be able to efficiently answer queries.

Reasoning in lifted representations has a complexity polynomial in domain sizes.
But, models dissolve into ground instances through evidence, which no longer permits reasoning in polynomial time, making query answering infeasible for any reasoning algorithm, exact or approximate.
Thus, a key challenge during inference in temporal models is to restore a lifted, i.e., non-grounded, representation.
Therefore, we formulate and study the problem of \ac{krp} in temporal models to tame the effect of evidence for efficient query answering. 

First-order probabilistic inference leverages the relational aspect of a static model, using representatives for groups of indistinguishable, known objects, also known as lifting \cite{poole2003first}. 
\citeauthor{poole2003first} \shortcite{poole2003first} presents parametric factor graphs as relational models and proposes \ac{lve} as an exact inference algorithm on relational models.
\citeauthor{TagFiDaBl13} \shortcite{TagFiDaBl13} extend \ac{lve} to its current form.
To benefit from the ideas of the junction tree algorithm \cite{lauritzen1988local} and \ac{lve}, \citeauthor{BrMoe16a} \shortcite{BrMoe16a} present the \ac{ljt} for exact inference given a set of queries.
To answer multiple temporal queries, \citeauthor{gehrke2018ldjt} \shortcite{gehrke2018ldjt} present the \ac{ldjt}, which combines the advantages of the interface algorithm~\cite{Murphy:2002:DBN} and \ac{ljt}.
Other approaches for temporal relational models perform approximate inference.
\citeauthor{ahmadi2013exploiting} \shortcite{ahmadi2013exploiting} propose a colour passing scheme to obtain a lifted representation of a \ac{dmln} using exact symmetries and extend lifted belief propagation \cite{singla2008lifted} for temporal approximate inference.
Further inference algorithms for \acp{dmln} exist \cite{geier2011approximate,papai2012slice}.
But, to the best of our knowledge, none of these approaches tackle the \ac{krp} problem.

For static relational models, approaches exist to approximate symmetries as evidence may ground even a static model \cite{van2012conditioning}.
\citeauthor{van2013complexity} \shortcite{van2013complexity} approximate lifted binary evidence.
\citeauthor{singla2014approximate} \shortcite{singla2014approximate} propose approximate lifting techniques, which group together distinguishable objects and treat them identically.
\citeauthor{venugopal2014evidence} \shortcite{venugopal2014evidence} form clusters of objects and project the marginal distribution of one object to all objects of a cluster.
Both approaches introduce an unknown bias into the distributions of the groups. 
\citeauthor{van2015lifted} \shortcite{van2015lifted} present an unbiased approach for approximating symmetries. 
However, these approaches do not account for temporal aspects.

Thus, we present \ac{tam} as an approach to solve the \ac{krp} problem in temporal models.
Specifically, \ac{tam} incorporates 
\begin{enumerate*}
	\item clustering to group submodels and 
	\item statistical significance checks to test the groups to be merged.
\end{enumerate*} 
Model structure and behaviour are captured in a set of functions that define local distributions for the \acp{rv} in the model.
Clustering forms groups of functions based on the similarity between local distributions.
The significance checks allow for determining the fitness of the clustering outcome.
If the clustering is deemed fit, each group is merged, yielding an unbiased approximation.
In exchange for faster runtime, \ac{tam} introduces a bounded error, which becomes negligible over time.

\citeauthor{boyen1998tractable} \shortcite{boyen1998tractable} show that for stationary processes, evidence can lead to conditional dependences in temporal probabilistic propositional models, making inference infeasible.
They propose to introduce additional \acp{rv} to achieve conditional independences between subprocesses even under evidence. 
Further, \citeauthor{boyen1998tractable} show that, for any approximation scheme of belief state representations, the error contracts exponentially as the process evolves, making the introduced error bounded indefinitely \cite{boyen1998tractable}.
Their approach and \ac{tam} are related as in both cases evidence makes inference infeasible.
However, \ac{tam} aims at \emph{automatically} restoring a lifted representation.
In summary, the cause, namely evidence, is the same for both problems but the problems are different and the means to make inference possible again differ highly.

\ac{tam} is applicable in different formalisms and algorithms.
However, we discuss \ac{tam} as part of \ac{ldjt} for two reasons:
First, when advancing in time, \ac{ldjt} computes a minimal message that is the source of the most splits of the next time step.
Applying \ac{tam} on this message tackles the \ac{krp} problem at its root.
Second, using \ac{tam} with an exact algorithm allows for attributing errors to merging rather than imprecisions during reasoning.
Additionally, \ac{tam} is deterministic in its approximation, thereby, avoiding problems with sampling rates or ergodicity.
Empirical results show that \ac{tam} significantly improves performances of \ac{ldjt}, while keeping errors small and attributable to merging.

\begin{figure*}
    \centering
	\begin{minipage}[b]{.28\textwidth}
        \centering
        \scalebox{0.86}{
        \begin{tikzpicture}[rv/.style={draw, ellipse, inner sep=1pt},pf/.style={draw, rectangle, fill=gray},label distance=0.2mm]
        	\node[rv, inner sep=0.5pt]					 								(S)	{$R(X)$ };
            \pfs{left}{S}{20mm}{230}{$g^0$}{USa}{US}{USb}    
        	\node[rv, left of=US, node distance=20mm, inner sep=0.5pt]			(U)	{$Pub(X,J)$};    
        	\node[rv, below of=S, node distance=10mm, inner sep=0.5pt]			(T1)	{$A(X)$};    
            \pfs{left}{T1}{20mm}{230}{$g^1$}{ASa}{AS}{ASb}    
        	\node[rv, left of=AS, node distance=20mm, inner sep=0.5pt]			(A)	{$D(X)$};    
    
        	\draw (U) -- (US);
        	\draw (US) -- (S);
        	\draw (US) -- (T1);

        	\draw (A) -- (AS);
        	\draw (AS) -- (S);
        	\draw (AS) -- (T1);

        \end{tikzpicture}
        }
        \caption{Parfactor graph for $G^{ex}$}
        \label{fig:swe}	
    \end{minipage}\hfill
    \begin{minipage}[b]{.69\textwidth}
\centering
\scalebox{0.86}{
\begin{tikzpicture}[rv/.style={draw, ellipse, inner sep=1pt},pf/.style={draw, rectangle, fill=gray},label distance=0.2mm]
	\node[rv]					 								(S)	{$\mathbf{R_{t-1}(X)}$};
    \pfs{left}{S}{20mm}{230}{$g^0_{t-1}$}{USa}{US}{USb}    
	\node[rv, left of=US, node distance=20mm, inner sep=0.5pt]			(U)	{$Pub_{t-1}(X,J)$};    
	\node[rv, below of=S, node distance=10mm]						(T1)	{$\mathbf{A_{t-1}(X)}$};    
    \pfs{left}{T1}{20mm}{230}{$g^1_{t-1}$}{ASa}{AS}{ASb}    
	\node[rv, left of=AS, node distance=20mm, inner sep=0.5pt]			(A)	{$D_{t-1}(X)$};    
    
	\node[rv, right of = S, node distance=80mm]					 		(S1)	{$R_{t}(X)$};
    \pfs{left}{S1}{20mm}{230}{$g^0_{t}$}{USa}{US1}{USb}    
	\node[rv, left of=US1, node distance=20mm, inner sep=0.5pt]			(U1)	{$Pub_{t}(X,J)$};    
	\node[rv, below of=S1, node distance=10mm]						(T11)	{$A_{t}(X)$};    
    \pfs{left}{T11}{20mm}{230}{$g^1_{t}$}{ASa}{AS1}{ASb}    
	\node[rv, left of=AS1, node distance=20mm, inner sep=0.5pt]			(A1)	{$D_{t}(X)$};

    
    \pfs{right}{S}{18mm}{315}{$g^R$}{UAa}{IU}{UAb}

    \path [-] (IU) edge node {} (T1);
    \path [-, bend left=15] (IU) edge node {} (S1);
	\draw (IU) -- (S);
    
	\draw (U) -- (US);
	\draw (US) -- (S);
	\draw (US) -- (T1);

	\draw (A) -- (AS);
	\draw (AS) -- (S);
	\draw (AS) -- (T1);

	\draw (U1) -- (US1);
	\draw (US1) -- (S1);
	\draw (US1) -- (T11);

	\draw (A1) -- (AS1);
	\draw (AS1) -- (S1);
	\draw (AS1) -- (T11);


\end{tikzpicture}
}
\caption{$G_\rightarrow^{ex}$ the two-slice temporal parfactor graph for model $G^{ex}$}
\label{fig:TSPG}	
\end{minipage}
\end{figure*}

In the following, we recapitulate \acp{pdm} as a formalism for specifying temporal probabilistic relational models and \ac{ldjt} for efficient query answering in \acp{pdm}. 
Then, we present \ac{tam}, which includes clustering, significance checks, and merging. 
Lastly, we evaluate \ac{tam} theoretically and empirically. 

\section{Preliminaries}
We shortly present \acp{pm} \cite{braun2018parameterised}, then extend \acp{pm} to the temporal case, resulting in \acp{pdm}, and recapitulate \ac{ldjt} \cite{gehrke2018ldjt,GehBrMo19a}, a \emph{smoothing}, \emph{filtering}, and \emph{prediction} algorithm for \acp{pdm}.

\subsection{Parameterised Probabilistic Models}\label{pm}
\acp{pm} combine first-order logic with probabilistic models, using \acp{lv} as parameters to represent sets of indistinguishable constructs.
As an example, we set up a \ac{pm} to model the reputation of researchers, inspired by the competing workshop example \cite{milch2008lifted}, with a \ac{lv} representing researchers.
A reputation is influenced by activities such as publishing, doing active research, and attending conferences.
A \ac{rv} parameterised with \acp{lv} forms a \ac{prv}.

\begin{definition}
	Let $\mathbf{R}$ be a set of randvar names, $\mathbf{L}$ a set of logvar names, $\Phi$ a set of factor names, and $\mathbf{D}$ a set of constants (universe).
	All sets are finite.
	Each logvar $L$ has a domain $\mathcal{D}(L) \subseteq \mathbf{D}$.
	A \emph{constraint} is a tuple $(\mathcal{X}, C_{\mathbf{X}})$ of a sequence of logvars $\mathcal{X} = (X^1, \dots, X^n)$ and a set $C_{\mathcal{X}} \subseteq \times_{i = 1}^n\mathcal{D}(X_i)$.
	The symbol $\top$ for $C$ marks that no restrictions apply, i.e., $C_{\mathcal{X}} = \times_{i = 1}^n\mathcal{D}(X_i)$.
	A \emph{PRV} $R(L^1, \dots, L^n), n \geq 0$ is a syntactical construct of a randvar $R \in \mathbf{R}$ possibly combined with logvars $L^1, \dots, L^n \in \mathbf{L}$. 
	If $n = 0$, the PRV is parameterless and forms a propositional randvar.
	A PRV $A$ or logvar $L$ under constraint $C$ is given by $A_{|C}$ or $L_{|C}$, respectively.
	We may omit $|\top$ in $A_{|\top}$ or $L_{|\top}$.
	The term $\mathcal{R}(A)$ denotes the possible values (range) of a PRV $A$. 
	An \emph{event} $A = a$ denotes the occurrence of PRV $A$ with range value $a \in \mathcal{R}(A)$.
\end{definition}

We use the \ac{rv} names $A$, $D$, $R$, and $Pub$ for attends conference, does research, reputation, and publishes in journals, respectively, and $\mathbf{L} = \{X, J\}$ with $\mathcal{D}(X) = \{x_1, x_2, x_3\}$ (people) and $\mathcal{D}(J) = \{j_1, j_2\}$ (journals).
We build boolean \acp{prv} $A(X)$, $D(X)$, $R(X)$, and $Pub(X,J)$.
A \ac{pf} describes a function, mapping argument values to real values (potentials). 

\begin{definition}
	We denote a \emph{parfactor} $g$ by $\phi(\mathcal{A})_{| C}$ with $\mathcal{A} = (A^1, \dots, A^n)$ a sequence of PRVs, $\phi : \times_{i = 1}^n \mathcal{R}(A^i) \mapsto \mathbb{R}^+$ a function with name $\phi \in \Phi$, and $C$ a constraint on the logvars of $\mathcal{A}$. 
	We may omit $|\top$ in $\phi(\mathcal{A})_{| \top}$.
	The term $lv(Y)$ refers to the \acp{lv} in some element $Y$, a PRV, a parfactor or sets thereof.
	The term $gr(Y_{| C})$ denotes the set of all instances of $Y$ w.r.t.\ constraint $C$.
	A set of parfactors forms a \emph{model} $G := \{g^i\}_{i=1}^n$.
	The semantics of $G$ is given by grounding and building a full joint distribution.
	With $Z$ as the normalisation constant, $G$ represents $P_G = \frac{1}{Z} \prod_{f \in gr(G)} f$.
\end{definition}

Let us build the \ac{pm}
$G_{ex}$$=$$\{g^i\}^1_{i=0}$, shown in \cref{fig:swe}, with
 $g^0 = \phi^0(R(X), A(X),  Pub(X,J))_{| \top}$ and
 $g^1 = \phi^1(R(X), \linebreak A(X),  D(X))_{| \top}$, each with eight input-output pairs (omitted).
 Next, we present a temporal extension of a \ac{pm}.

\subsection{Parameterised Probabilistic Dynamic Models}\label{sec:pdm}

We define \acp{pdm} based on the first-order Markov assumption.
Further, the underlying process is stationary. 
\begin{definition}
	A \ac{pdm} $G$ is a pair of \acp{pm} $(G_0,G_\rightarrow)$ where
        $G_0$ is a PM representing the first time step and  
        $G_\rightarrow$ is a \acl{2tpm} representing $\mathbf{A}_{t-1}$ and $\mathbf{A}_t$ where $\mathbf{A}_\pi$ a set of \acp{prv} from time slice $\pi$.
        The semantics of $G$ is to instantiate $G$ for a given number of time steps, resulting in a \ac{pm} as defined above.
\end{definition}

\Cref{fig:TSPG} shows 
$G_\rightarrow^{ex}$ consisting of $G^{ex}$ for time slice $t-1$ and $t$ with \emph{inter}-slice \acp{pf} for the behaviour over time. 
The \ac{pf} $g^{R}$ is the \emph{inter}-slice \ac{pf}. 
For example, we can observe AAAI conference attendance, which changes over time as, unfortunately, getting papers accepted at consecutive conferences is difficult.
Nonetheless, people with high attendance usually have a good reputation.

In general, a query asks for a probability distribution of a \ac{rv} given fixed events as evidence.

\begin{definition}\label{def:query_dynamic}
    Given a \ac{pdm} $G$, a query term $Q$ (ground \ac{prv}), and events $\mathbf{E}_{0:t} = \{E^i_t=e^i_t\}_{i,t}$, the expression $P(Q_t|\mathbf{E}_{0:t})$ denotes a \emph{query} w.r.t.\ $P_G$.
\end{definition}
The problem of answering a query $P(A^i_\pi|\mathbf{E}_{0:t})$ w.r.t.\ the model is called  \emph{filtering} for $\pi = t$ and \emph{prediction} for $\pi > t$.
In this paper, we focus on such temporal queries. 

\subsection{Query Answering Algorithm: LDJT}\label{sec:fodjt}

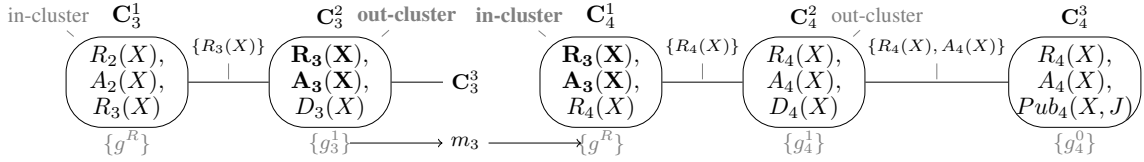
\begin{figure*}[t]
    \centering
\scalebox{0.9}{
\begin{tikzpicture}[every node/.style={font=\footnotesize}, node distance=30mm]
	\node[pc, label={[gray, inner sep=1pt]270:{$\{g^R\}$}},    pin={[pin distance=1mm, gray, align=center]120:{in-cluster}}, label={[font=]90:{$\mathbf{C}^1_3$}}]				(c1) {$R_{2}(X),$\\ $A_{2}(X),$ \\$R_{3}(X)$};
    
	\node[pc, right of=c1,     pin={[pin distance=1mm, gray, align=center]70:{\textbf{out-cluster}}},
label={[gray, inner sep=1pt]270:{$\{g^1_3 \}$}},label={90:{$\mathbf{C}^2_3$}}]	(c2) {$\mathbf{R_{3}(X)},$\\$ \mathbf{A_{3}(X)},$ \\ $D_{3}(X)$};

    \node[below of = c2, node distance=0.9cm, xshift= 0.17cm] (a) {};
    
    \node[right of = c2, node distance=2cm] (c) {$\mathbf{C}^3_3$};
    \node[below of = c, node distance=0.9cm] (c11) {$m_{3}$};
    
	\node[pc, right of=c, node distance=2cm, label={[gray, inner sep=1pt]270:{$\{g^R\}$}},  pin={[pin distance=1mm, gray, align=center]120:{\textbf{in-cluster}}}, label={[font=]90:{$\mathbf{C}^1_4$}}]				(c4) {$\mathbf{R_{3}(X)},$\\ $\mathbf{A{_3}(X)},$ \\$R_{4}(X)$};
	\node[pc, right of=c4,     pin={[pin distance=1mm, gray, align=center]70:{out-cluster}},
label={[gray, inner sep=1pt]270:{$\{g^1_4\}$}},label={90:{$\mathbf{C}^2_4$}}]	(c5) {$R_{4}(X),$\\$ A_{4}(X),$ \\ $D_{4}(X)$};
    \node[pc, right of=c5, node distance = 40mm, label={[gray, inner sep=1pt]270:{$\{g^0_4\}$}},label={90:{$\mathbf{C}^3_4$}}]	(c6) {$R_{4}(X),$\\$ A_{4}(X),$ \\ $Pub_{4}(X,J)$};
      
    \node[below of = c4, node distance=0.9cm, xshift= -0.17cm] (b) {};

	\draw (c1) -- node[inner sep=1pt, pin={[yshift=-2.4mm]90:{\scriptsize$\{R_3(X)\}$}}]		{} (c2);
	\draw (c2) -- node[]		{} (c);

	\draw (c4) -- node[inner sep=1pt, pin={[yshift=-2.4mm]90:{\scriptsize$\{R_4(X)\}$}}]		{} (c5);
	\draw (c5) -- node[inner sep=1pt, pin={[yshift=-2.4mm]90:{\scriptsize$\{R_4(X), A_4(X)\}$}}]		{} (c6);

    \path [<-] (c11) edge node [] {} (a);
    \path [->] (c11) edge node [] {} (b);

\end{tikzpicture}
}
\caption{FO jtree $J_3$ without $\mathbf{C}^3_3$ and FO jtree $J_4$ connected with $m_{3}$}
\label{fig:fojt1}	
\end{figure*}

The important property of \ac{ldjt} \cite{gehrke2018ldjt} for this paper is that \ac{ldjt} constructs \acp{fojt} to efficiently answer multiple queries using \ac{lve}.
The \acp{fojt} in \ac{ldjt} contain a minimal set of \acp{prv} to m-separate time steps, which means that information about these \acp{prv} renders \acp{fojt} independent from each other.
Let us now define an \ac{fojt}, with parameterised clusters (parclusters) as nodes, and present how \ac{ldjt} proceeds in time. 

\begin{definition}
	Let $\mathbf{X}$ be a set of logvars, $\mathbf{A}$ a set of PRVs with $lv(\mathbf{A}) \subseteq \mathbf{X}$, and $C$ a constraint on $\mathbf{X}$.
	Then, $\mathbf{A}_{|C}$ denotes a \emph{parcluster}.
	We omit ${|C}$ if $C = \top$ and $lv(\mathbf{A}) = \mathbf{X}$.
	An \emph{FO jtree} for a model $G$ is a cycle-free graph $J = (V, E)$, where $V$ is the set of nodes, i.e., parclusters, and $E$ the set of edges.
	$J$ must satisfy three properties:
	\begin{enumerate*}
		\item A parcluster $\mathbf{C}^i$ is a set of \acp{prv} from $G$.
		\item For each \ac{pf} $\phi(\mathcal{A})_{ | C}$  in G, $\mathcal{A}$ must appear in some parcluster $\mathbf{C}^i$.
		\item If a \ac{prv} from $G$ appears in two parclusters $\mathbf{C}^i$ and $\mathbf{C}^j$, it must also appear in every parcluster $\mathbf{C}^k$ on the path connecting nodes $i$ and $j$ in $J$.
	\end{enumerate*}
	The parameterised set $\mathbf{S}^{ij}$, called \emph{separator} of edge $\{i, j\} \in E$, is defined by $\mathbf{C}^i \cap \mathbf{C}^j$.
	Each $\mathbf{C}^i \in V$ has a \emph{local model} $G^i$ and $\forall g \in G^i$: $rv(g) \subseteq \mathbf{C}^i$.
	The $G_i$'s partition $G$.
\end{definition}

Querying a minimal set of \acp{prv} with \ac{lve} in an \ac{fojt} combines all information to m-separate time steps.
To obtain the minimal set, \ac{ldjt} uses interface \acp{prv} $\mathbf{I}_t$ of $G_\rightarrow$. 

\begin{definition}
    The forward interface $\mathbf{I}_{t-1}$ is given by \[\mathbf{I}_{t-1} = \{A_{t}^i \mid \exists \phi(\mathcal{A})_{ | C} \in G :  A_{t-1}^i \in \mathcal{A} \wedge \exists A_{t}^j \in \mathcal{A}\}.\]
\end{definition}

\acp{prv} $R_{t-1}(X)$ and $A_{t-1}(X)$ from $G_{\rightarrow}^{ex}$, shown in \cref{fig:TSPG}, make up $\mathbf{I}_{t-1}$.
While constructing \ac{fojt} structures, \ac{ldjt} ensures that the \ac{fojt} $J_t$ for time step $t$ has a parcluster containing $\mathbf{I}_{t-1}$, which is called \emph{in-cluster}, and a parcluster containing $\mathbf{I}_{t}$, which is called \emph{out-cluster},.
The \emph{in-} and \emph{out-clusters} allow for reusing the \ac{fojt} structures.

To proceed in time, \ac{ldjt} calculates a forward message $m_t$ over $\mathbf{I}_t$ using the \emph{out-cluster} of $J_t$. 
Hence, $m_t$ contains exactly the necessary information, as a set of \acp{pf}, to be able to answer queries in the next time step.
Afterwards, \ac{ldjt} adds $m_{t}$ to the local model of the \emph{in-cluster} of $J_{t+1}$.

\Cref{fig:fojt1} depicts passing on the current state from time step $3$ to $4$.
To capture the state at $t=3$, \ac{ldjt} sums out the non-interface \ac{prv} $D(X)$ from the local model and received messages of $\mathbf{C}_3^2$ and saves the result in message $m_{3}$.
Increasing $t$ by one, \ac{ldjt} adds $m_{3}$ to $\mathbf{C}_4^1$'s local model. 

\section{Temporal Approximate Merging}
In a temporal probabilistic relational model, evidence can slowly ground the model over time by introducing splits. 
We propose to name the problem of \emph{finding how to undo splits to retain a lifted solution over time, keeping any error unbiased and acceptable}, as the \ac{krp} problem.
Retaining a lifted solution over time means that lifted algorithms run in polynomial time w.r.t.\ the domain size if a lifted solution exists \cite{niepert2014tractability}.
To solve the \ac{krp} problem, an approach is required to identify any number of clusters based on how similar $\phi$'s of \acp{pf} are and combine them.
To keep the error unbiased and acceptable, the groundings need to be accounted for and the identified cluster means have to discriminate the clusters.
Unfortunately, to combine similar $\phi$'s, we cannot use the colouring algorithm \cite{ahmadi2013exploiting} as it uses exact symmetries.
Before presenting \ac{tam}, let us formulate the problem in terms of \acp{pdm}.

Even though \ac{ldjt} instantiates vanilla \ac{fojt} structures, $m_t$ carries over splits caused by evidence.
Formally, the problem is that in a model $G_t = \{g_t^i\}_{i=1}^n$ at time step $t$, many parfactors are split.
Whenever evidence leads to a split of a parfactor, the split carries over to subsequent time steps.
Thus, $G_t$ has the following form:
\begin{align}
	\{g_t^{i,1}, \dots, g_t^{i,m}\}_{i=1}^{n}, m \in \mathbb{N}^+. \label{eq:input}
\end{align}
For each $i$, the different $g_t^{i,j} = \phi_t^{i,j}(\mathcal{A}^i)_{|C^{i,j}}$, $1 \leq j \leq m$, have the same arguments $\mathcal{A}^i$ but different constraints $C^{i,j}$ and varying functions $\phi_t^{i,j}$ as a result of evidence.
The assumption is that some $g_t^{i,j}$ have similar $\phi$'s as differences introduced by evidence are minimal or otherwise are overcome by model behaviour over time, i.e., potentials align again.
Then, one can combine similar $\phi$'s while introducing only a small and bounded error in exchange for faster reasoning. 
In the following, we show that the assumption holds, by showing that $\phi$'s converge, allowing them to be merged, and that the error \ac{tam} introduces is bounded.

The idea for restoring a lifted representation is to merge those $g_t^{i,j}$ with similar $\phi$'s into one parfactor
\begin{align}
	g_t^{i,k} = \phi_t^{i,k}(\mathcal{A}^i)_{|C^{i,k}} \label{eq:merged}
\end{align}
where $\phi_t^{i,k}$ represents a merged version of the combined $\phi_t^{i,j}$ and $C^{i,k}$ is a union of the combined $C^{i,j}$.
Merging all parfactors that behave similarly for each $i$ leads to a $G_t'$ of the following form with parfactors as in \cref{eq:merged} and $l < m$:
\begin{align}
	\{g_t^{i,1}, \dots, g_t^{i,l}\}_{i=1}^{n} \label{eq:output}
\end{align}

With \ac{tam}, we present a merging scheme that takes a model $G$ as given in \cref{eq:input} and computes a model $G'$ as given in \cref{eq:output}.
It is reasonable to apply \ac{tam} to $G$ when transitioning from time step $t$ to $t+1$ as the transition transfers any splits as well.
In general, $G$ may be any parfactor model and one may also transfer the idea to a \ac{dmln} model \cite{ahmadi2013exploiting}. 
But, models may be very large, e.g., the union of all local models of an FO jtree $J_t$, such that finding groups for each $i$ is too costly.
Therefore, we propose to make \ac{tam} a subroutine of \ac{ldjt}.
Transitioning from $t$ to $t+1$ requires computing message $m_t$, which provides a state description of $t$ that is relevant to $t+1$.
Applying \ac{tam} to $m_t$ prepares a message with fewer groups, leading to fewer splits in $J_{t+1}$.
Additionally, $m_t$ normally has considerably fewer parfactors than $G_t$.
Next, we explain in detail how to get from \cref{eq:input} to \cref{eq:output} with \ac{tam}.

\subsection{Keeping Reasoning Polynomial with \ac{tam}}
\Cref{alg:tam} outlines \ac{tam} to solve the \ac{krp} problem.
Inputs are a model $G$, possibly $m_t$, as well as two additional parameters, radius $\epsilon$ and significance level $\alpha$, which become important later on.
The first step is to preprocess $G$ for easier handling in subsequent steps.
The main loop describes how a clustering algorithm identifies groups for merging and how groups are merged if \ac{tam} deems the clusters to fit.
The upcoming paragraphs discuss the individual steps of \cref{alg:tam}.

\begin{algorithm}[b]
	\caption{Temporal Approximate Merging}
	\label{alg:tam}
	\begin{algorithmic}
		\Procedure{TAMe}{Model $G$, Radius $\epsilon$, Significance $\alpha$}
			\State $\mathbf{P} \gets$ partitioning of $G$ based on \acp{lv} 		\Comment \cref{eq:partition}
			\For {each partition $P \in \mathbf{P}$}
				\State $P \gets$ multiply overlapping parfactors				\Comment \cref{eq:mult}
				\State $\mathbf{K} \gets$ \Call{DBSCAN}{$P$, $\epsilon$, $2$, $rsim$}  
				\If{\Call{ANOVA}{$\mathbf{K}$, $rsim'$, $\alpha$} rejects $H_0$}
					\State $G \gets G \setminus P$
					\For {each cluster $K \in \mathbf{K}$}
						\State $G \gets G \cup\{K \text{ merged}\}$		\Comment \cref{eq:mean}
					\EndFor
				\EndIf
			\EndFor
		\EndProcedure
	\end{algorithmic}  
\end{algorithm}

\subsubsection{Model Partitioning}
The preprocessing of $G$ is a consequence of the following considerations.
A challenge that arises from a model as in \cref{eq:input} is that merging parfactors for each $i$ independent of each other may lead to different groups that cause splits again, undoing any merging efforts.
Using an $i$ at random and transferring the grouping of the $i$ parfactors to all other parfactors may lead to unreasonable groups for the other $i$'s.
A safe option is to multiply parfactors with overlapping constraints into one parfactor which in a worst case leads to $n=1$ and very large parfactors that no longer explicitly represent independencies and may complicate calculations for messages and queries.
Within \ac{ldjt}, one could trace back if a set of parfactors in $m_t$ originates from the message that has come from the direction of the \emph{in-cluster} to the \emph{out-cluster} as this message contains information about the past and is the origin of the most splits in $m_t$.
Therefore, it may be possible to identify a unique $i$ in $m_t$ as a reasonable source for merging. 
However, there are no guarantees to find such an $i$.
Instead, we opt to partition the parfactors in $G$ based on the logvars appearing in $G$ into a set $\mathbf{P}$ of sets of parfactors.
Each partition $P \in \mathbf{P}$ has a set of logvars $\mathbf{X}_p$ that has been affected in the same way by splitting due to evidence.
Formally, $P$ has the form 
\begin{align}
	P = \{g_t^{i,1}, \dots, g_t^{i,m}\}_{i=1}^{n_p} \label{eq:partition}
\end{align}
with $lv(g_t^{i,j}) \subseteq \mathbf{X}_p$.
The next step is to identify groups of parfactors in each partition that behave similarly.

\subsubsection{Parfactor Clustering}
After partitioning $G$, each partition $P \in \mathbf{P}$ of the form in \cref{eq:partition} has parfactors whose constraints overlap between all $i$ for each $j$.
Therefore, \ac{tam} multiplies all parfactors with overlapping constraints into one parfactor before starting with identifying groups.
If $lv(g_t^{i,j}) = \mathbf{X}_p$, each $i$ refers to $m$ parfactors with the same constraint over all $i$'s for each $j$, i.e, the constraints are the same at position $j$ for all $i$'s.
Then, multiplication in $P$ to combine \acp{prv} with the same constraints boils down to
\begin{align}
	P = \left\{\prod_{i = 1}^{n_p} g_t^{i,1}, \dots, \prod_{i = 1}^{n_p} g_t^{i,m}\right\} = \{g_t^{p,1}, \dots, g_t^{p,m} \} \label{eq:mult}
\end{align}
where multiplying parfactors corresponds to the LVE operation of \emph{multiply}, c.f.\ \cite{TagFiDaBl13}.

To identify groups of parfactors with similar behaviour, one needs to specify 
\begin{enumerate*}
	\item what ``similar behaviour'' means and 
	\item how to find such groups automatically.
\end{enumerate*}
We first consider the second item, which influences specifying the first item.

\ac{tam} needs to identify an unknown number of groups based on how similar $\phi$'s are.
Density-based clustering groups similar points into an unknown number of groups. 
Therefore, \ac{tam} uses density-based clustering.
For the evaluation, we instantiate \ac{tam} with DBSCAN \cite{ester1996density,schubert2017dbscan} as the clustering approach.
In the following, we illustrate how density-based clustering fits into the overall scheme of \ac{tam} using  DBSCAN.
DBSCAN identifies data points as core points if in their neighbourhoods, determined by a radius $\epsilon$ around a point, lie a certain number $minPts$ of other data points.
A core data point makes up a cluster along with all the data points in its neighbourhood, which recursively proceeds with the next core data point in the neighbourhood.
To determine data points in a neighbourhood, DBSCAN requires a distance function as an input.
DBSCAN is able to detect outliers, which do not occur in any neighbourhood.
For the purpose of clustering parfactors, we set $minPts$ to $2$ to be able to cluster even two parfactors.
The distance measure should assess how similarly parfactors behave, with $0$ meaning identical behaviour and larger values meaning less similar behaviour.

To determine the similarity of the behaviour of two parfactors, one could calculate marginal distributions for a \ac{prv} that occurs with split constraints and compare if the marginals are in a certain $\delta$ area.
However, marginal distributions could result from completely different potentials and be similar by chance.
The potentials of a \ac{pf} on the other hand specify the current weight for each possible assignment.
Thus, in case the ratio of the potentials of two \acp{pf} are similar, they also have similar marginal distributions and behave similarly.
For example, a \ac{pf} mapping to $4$ and $2$ and another \ac{pf} mapping to $8.1$ and $3.9$ behave similarly.
Both \acp{pf} weight the first assignment about twice as much as the second.
Assuming both \acp{pf} are independent from the rest and only have one grounding each, the marginals for $true$ would be $0.667$ and $0.675$ respectively, i.e., less than $0.01$ apart from each other.
The same case arises for two parfactors mapping to $\langle 4, 2\rangle$ and $\langle 4.1, 1.9 \rangle$ respectively.

Such potentials, when thought of as vectors, have a small angle between them, i.e., a high cosine similarity, which we use to specify ``similar behaviour''.
For the setup of the similarity of two parfactors $g_t^{i,j_1} = \phi_t^{i,j_1}(\mathcal{A}^i)_{|C^{i,j_1}}$ and $g_t^{i,j_2} = \phi_t^{i,j_2}(\mathcal{A}^i)_{|C^{i,j_2}}$, we use a function $rsim : (\times_{i = 1}^n range(A^i) \mapsto \mathbb{R}^+,\times_{i = 1}^n range(A^i) \mapsto \mathbb{R}^+) \mapsto \mathbb{R}^+$ that is defined as follows:
\begin{flalign}
	&rsim(\phi_t^{i,j_1}, \phi_t^{i,j_2}) = \nonumber\\ 
	&1 - \frac{\displaystyle\sum_{\mathbf{a} \in range(\mathcal{A}^i)}\phi_t^{i,j_1}(\mathbf{a})\cdot \phi_t^{i,j_2}(\mathbf{a})}{\sqrt{\displaystyle\sum_{\mathbf{a} \in range(\mathcal{A}^i)}\phi_t^{i,j_1}(\mathbf{a})^2}\cdot \sqrt{\displaystyle\sum_{\mathbf{a} \in range(\mathcal{A}^i)}\phi_t^{i,j_2}(\mathbf{a})^2}} \label{eq:rsim}
\end{flalign}
The result of \cref{eq:rsim} lies in the interval $[0,1]$.
We calculate $1$ minus the fraction to get a ``distance'' measure, in which a lower value means a closer distance.

As a consequence of $rsim$ with its codomain $[0,1]$ as the distance function for DBSCAN, $\epsilon$ needs to be $\leq 1$.
Overall, the inputs of DBSCAN for clustering parfactors are a partition $P$ of parfactors, $\epsilon$, $minPts = 2$, and $rsim$.
$\epsilon$ trades off cluster sizes with accuracy.
The output is a clustering (partitioning) of $P$, i.e., a set $\mathbf{K}$ of sets in which each $K \in \mathbf{K}$ is a set of parfactors that are assumed to behave similarly.

\subsubsection{Fitness of Clustering}
The question that remains after clustering is: How good is the clustering?
The clustering is highly influenced by the choice of $\epsilon$, which leads to large clusters if set to a high value but may also blur the potentials in the merged parfactor to a higher degree.

One could calculate the error introduced by the clustering w.r.t.\ a given \ac{prv} $A$ by comparing marginal distributions of $A$ before and after merging.
However, if a model already is highly shattered, the computational effort can be very high to compute marginal distributions before merging.

DBSCAN clusters together parfactors with a small angle between them.
So a clustering fits if the variance of angles within clusters is low and the variance of angles between clusters is high.
Analysis of variance (ANOVA) \cite{fisher1925statistical} is a statistical method to test for significance of a clustering.
In our setup, ANOVA computes the variance of each parfactor in a cluster $K \in \mathbf{K}$ to the mean parfactor of $K$ as well as the variance of the mean parfactor of $K$ to the mean parfactor of all points in $\mathbf{K}$.
Hence, it provides an indication of how good the clustering separates parfactors.

ANOVA is used to accept or reject hypotheses.
The default hypothesis is that the means of all clusters are equal.
For our problem, the default hypothesis $H_0$ is that the mean parfactors of the clusters are equal, i.e., are not statistically significant to discriminate clusters.
The goal is to be able to reject $H_0$, that is to say there is more difference between than within clusters.
In case \ac{tam} can reject $H_0$, at least one cluster is significantly different from the others.

To compute a mean parfactor of a cluster $K$, \ac{tam} calculates the average of all potentials while accounting for groundings.
Formally, given a set of parfactors $\{\phi_t^{i,j}(\mathcal{A}^i)_{|C^{i,j}}\}_{j=1}^m$, a mean parfactor $g_t^{i,k} = \phi_t^{i,k}(\mathcal{A}^i)_{|C^{i,k}}$ is determined by
\begin{align}
	\phi_t^{i,k}(\mathbf{a}) = \frac{\sum_{j=1}^{m}gr(\phi_t^{i,j}(\mathbf{a})_{|C^{i,j}})\phi_t^{i,j}(\mathbf{a})}{gr(\phi_t^{i,k}(\mathbf{a})_{|C^{i,k}})} \label{eq:mean}
\end{align}
for each $\mathbf{a} \in range(\mathcal{A}^i)$ and $C^{i,k}$ is a union of the different $C^{i,j}$.
Thus, \ac{tam} goes through all potentials and for each assignment, adds the current potential, which is multiplied by the number of groundings of the current \ac{pf}.
After all potentials are added up, \ac{tam} divides the potential by the number of overall groundings to obtain a mean potential.
To illustrate \cref{eq:mean}, consider a cluster with $3$ \acp{pf}.
The first \ac{pf} maps to the potentials $2$ and $1$ with $2$ groundings, the second maps to $3.9$ and $1.9$ with $5$ groundings, and the third maps to $8.1$ and $4$ with $1$ grounding.
To calculate the mean potential, \ac{tam} calculates for the first mapping $(2 \cdot 2+5 \cdot 3.9+1 \cdot 8.1)/8 = 3.95$ and for the second mapping $(2 \cdot 1+5 \cdot 1.9+1 \cdot 4)/8 = 1.9375$.
Thus, the mean \ac{pf} maps to $3.95$ and $1.9375$ with $8$ groundings.

To calculate variances of parfactors, \ac{tam} uses $rsim$ as the clusters have been built based on $rsim$.
The intuition behind the choice is that if two parfactors have a very small angle between their potentials, then the variance of the potentials would be close to $0$.
The variance increases with the angle between potentials. 
As the number of groundings influences the new potentials, we also include the number of groundings while calculating a variance as the function should reflect that a parfactor that represents more groundings has a greater weight than one parfactor with one grounding, i.e., semantically we have that factor more often and therefore, in the ground case the variance would be calculated more often.

After computing a mean parfactor $g_t^{i,k}$ for each cluster $K \in \mathbf{K}$  and an overall mean parfactor $g_t^{i,m}$ based on all parfactors in $\mathbf{K}$, ANOVA proceeds to compute the variation between groups, i.e., $MSG$, and within groups, i.e., $MSE$, using \cref{eq:rsim} and the groundings of \acp{pf}:

\begin{align*}
	MSG	&= \frac{1}{l-1} \sum_{K \in \mathbf{K}} gr(g_t^{i,k}) \cdot (rsim(g_t^{i,k}, g_t^{i,m}))^2 	\\
	MSE	&= \frac{1}{m-l} \sum_{K \in \mathbf{K}}\sum_{g_t^{i,j} \in K} gr(g_t^{i,j}) \cdot (rsim(g_t^{i,j}, g_t^{i,k}))^2		
\end{align*}
where $l = |\mathbf{K}|$, i.e., number of clusters, and $m = |gr(\mathbf{K})|$, i.e., number of overall groundings.
Computing $F = \frac{MSG}{MSE}$, ANOVA compares $F$ against a critical value $F_{crit}$, which depends on $\alpha$, $l-1$, and $m-l$ and can be looked up in a pre-computed table.
If $F \leq F_{crit}$, \ac{tam} accepts $H_0$ and discards the clustering.
In case \ac{tam} rejects $H_0$, i.e., $F > F_{crit}$, there is more difference between clusters than within clusters and \ac{tam} proceeds to merging parfactors. 

\subsubsection{Merging Parfactors}
The new parfactor for each cluster $K \in \mathbf{K}$ is the mean parfactor $g_t^{i,k}$ already computed by ANOVA.
\ac{tam} replaces $P$ in $G$ with the merged parfactors.
Then, \ac{tam} proceeds with the next partition, identifying and checking a clustering for the new partition, until all partitions are processed.
The result is a model whose parfactors are merged versions of the input model, partially restoring a lifted representation.
Given a forward message $m_t$, the output is a message that possibly contains fewer groups within logvars and thus, prevents ongoing splitting over time.

\subsubsection{Application Cycle}
As ANOVA may determine that the clustering is not fit enough, \ac{tam} may incur overhead if \ac{tam} cannot merge groups.
Therefore, in most cases, \ac{tam} should not be applied at every time step.
Normally, the model is slowly grounded over time with evidence, but if the groups behave similarly, which is the case due to the impact of the model, the reoccurring application of the model behaviour results in the potentials being similar enough for \ac{tam} to merge them.
Thus, based on how much evidence splits up the model, the interval of how often \ac{tam} should be used as a subroutine needs to be determined.

Next, we look at theoretical implications of \ac{tam}.

\section{Theoretical Analysis}
We show that \ac{tam} introduces an acceptable, unbiased, and bounded error and that \ac{tam} keeps reasoning polynomial.

\begin{proposition}\label{pro:err}
	\ac{tam} errors are acceptable and unbiased.
\end{proposition}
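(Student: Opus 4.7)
The plan is to separate the proposition into its two claims, \emph{unbiasedness} and \emph{acceptability}, and prove each from the explicit construction in \Cref{alg:tam}, especially \cref{eq:mean} (the mean parfactor) and the DBSCAN/ANOVA gating that precedes any replacement.

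For \textbf{unbiasedness}, I would argue that the weighted mean in \cref{eq:mean} is exactly the minimum-variance, bias-free aggregate for the replacement operation. Concretely, for any assignment $\mathbf{a}\in range(\mathcal{A}^i)$, the total potential mass contributed by the original cluster is $\sum_{j} gr(\phi_t^{i,j}(\mathbf{a})_{|C^{i,j}})\,\phi_t^{i,j}(\mathbf{a})$; replacing the cluster by $g_t^{i,k}$ with constraint $C^{i,k}=\bigcup_j C^{i,j}$ and potential $\phi_t^{i,k}(\mathbf{a})$ redistributes exactly this mass uniformly over $gr(\phi_t^{i,k}(\mathbf{a})_{|C^{i,k}})$ groundings. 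Hence, for every $\mathbf{a}$, the aggregated log-potential summed over ground factors is preserved. In particular, the mean error over the set of ground instances that the merged parfactor replaces is zero, which is the natural notion of ``no systematic bias'' in a lifted representation. I would also note that because the disjointness of the $C^{i,j}$ is guaranteed by the partitioning step \cref{eq:partition} and the multiplication step \cref{eq:mult}, no grounding is double-counted, so the aggregate-preservation argument is tight.

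For \textbf{acceptability}, I would give a per-parfactor bound controlled by the DBSCAN radius $\epsilon$ and the ANOVA rejection of $H_0$. Since every parfactor $g_t^{i,j}$ placed in cluster $K$ lies within $rsim$-distance at most $\epsilon$ of some core point, the pairwise $rsim$ distances inside a cluster are at most $2\epsilon$ (by the triangle-like behaviour of cosine-based distances on non-negative vectors). Because \cref{eq:mean} is a grounding-weighted convex combination of the cluster members viewed as non-negative vectors, the merged mean $\phi_t^{i,k}$ lies in the convex hull of the cluster, and I would show that $rsim(\phi_t^{i,j},\phi_t^{i,k})\le 2\epsilon$ for every $j$ by a straightforward cosine-similarity argument on convex combinations. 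Combined with the ANOVA gate, which only permits merging when between-cluster variation dominates within-cluster variation at significance level $\alpha$, the potential shift applied to any individual ground factor is uniformly bounded by a function of $(\epsilon,\alpha)$ and is statistically justified, which is the sense in which the error is ``acceptable.''

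The main obstacle I expect is the convex-hull/cosine step: showing that the grounding-weighted mean stays within $rsim$-distance $O(\epsilon)$ of every cluster member is clean for unnormalised non-negative vectors but requires some care because $rsim$ is $1-\cos\theta$, not a true metric, and the weights $gr(\cdot)$ can be very unequal. I would handle this by bounding $\cos\theta$ between a vector and a convex combination from below using the minimum pairwise cosine inside the cluster, which follows from the bilinearity of the inner product and Cauchy--Schwarz; a short computation then converts the $\epsilon$-neighbourhood guarantee from DBSCAN into the desired per-parfactor bound. Everything else, including the no-bias claim and invoking ANOVA as the acceptance criterion, follows directly from the construction in \Cref{alg:tam}.
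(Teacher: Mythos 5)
Your decomposition (unbiasedness from the grounding-weighted mean in \cref{eq:mean}, acceptability from the DBSCAN radius plus the ANOVA gate) is exactly the skeleton of the paper's own argument, which is only a three-sentence sketch invoking those same three ingredients. However, in trying to make the sketch rigorous you introduce two steps that fail. First, the claim that pairwise $rsim$ distances inside a DBSCAN cluster are at most $2\epsilon$ is false: DBSCAN forms clusters by density-reachability, and with $minPts=2$ every clustered point is a core point, so clusters grow by chaining $\epsilon$-neighbourhoods. A chain of $k$ parfactors, each within $\epsilon$ of the next, forms a single cluster whose diameter can be on the order of $k\epsilon$, so no uniform per-parfactor bound $rsim(\phi_t^{i,j},\phi_t^{i,k})\le 2\epsilon$ follows from the $\epsilon$-neighbourhood guarantee, and the convex-combination argument built on top of it collapses. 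The ANOVA gate does not repair this, since it tests the \emph{ratio} of between- to within-cluster variation, not the absolute within-cluster spread; your ``uniformly bounded by a function of $(\epsilon,\alpha)$'' conclusion is therefore not established.

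Second, your unbiasedness argument overreaches. \Cref{eq:mean} preserves the grounding-weighted \emph{sum of potentials} $\sum_j gr(\phi_t^{i,j}(\mathbf{a})_{|C^{i,j}})\,\phi_t^{i,j}(\mathbf{a})$, which is the correct observation and is all the paper means by ``accounting for groundings.'' But the model semantics is $P_G\propto\prod_{f\in gr(G)}f$, so the quantity relevant to the joint distribution at assignment $\mathbf{a}$ is the product $\prod_j \phi_t^{i,j}(\mathbf{a})^{gr(\cdot)}$, i.e., the sum of \emph{log}-potentials. The arithmetic weighted mean does not preserve that sum (by the AM--GM inequality it systematically overestimates the geometric mean), so your sentence ``the aggregated log-potential summed over ground factors is preserved'' is false as stated, and the merge does alter the represented distribution. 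The weaker claim — that each cluster member contributes in proportion to its number of groundings, so no single member's potentials dominate the merged function — is what the paper actually asserts, and your first computation does establish that. If you drop the log-potential claim and replace the $2\epsilon$ diameter bound with an explicit assumption (or a post-hoc check) on cluster diameter, the rest of your argument aligns with the paper's intent.
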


Due to a density-based clustering, \ac{tam} clusters \acp{pf} with similar $\phi$'s.
ANOVA determines the fitness of clusterings to prevent unacceptable errors.
By accounting for groundings during merging, the error is unbiased.
    
Knowing that \ac{tam} produces acceptable and unbiased errors, let us have a look at theoretical bounds of the approximation error \ac{tam} introduces as well as whether groups with only slightly different evidence do converge, allowing \ac{tam} to keep reasoning polynomial.

\begin{theorem}\label{thm:bnd}
    \ac{tam} introduces a bounded error.
\end{theorem}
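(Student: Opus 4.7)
The plan is to bound the error introduced by \ac{tam} in two stages: a per-step bound from one application of the merging procedure, and a temporal accumulation bound across time steps. For the per-step bound, I fix a partition $P$ and a cluster $K$ produced by DBSCAN, and ask how far the merged potential $\phi_t^{i,k}$ computed via \cref{eq:mean} can deviate from any original $\phi_t^{i,j}\in K$ under the measure $rsim$. Since \cref{eq:mean} is a grounding-weighted convex combination of the $\phi_t^{i,j}$, and the cosine distance is well behaved on convex combinations of vectors lying in a common spherical cap, I expect a bound of the form $rsim(\phi_t^{i,j},\phi_t^{i,k}) \le \delta(\epsilon,\alpha)$ with $\delta\to 0$ as $\epsilon\to 0$. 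A bound on $rsim$ then translates into a bound on the normalised distribution induced by the parfactor via the Lipschitz relationship between cosine distance and total variation distance on non-negative vectors, yielding a single-step error $\eta(\epsilon,\alpha)$ on induced marginals.

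For the temporal stage, I invoke the contraction result of \citeauthor{boyen1998tractable}~\shortcite{boyen1998tractable} already cited in the paper: in a stationary process, any approximation error in the belief state contracts by a factor $\gamma<1$ at each stochastic transition. Since \ac{tam} is applied precisely to the forward message $m_t$, which is the belief-state carrier in \ac{ldjt}, each application injects at most $\eta(\epsilon,\alpha)$ of new error while previously injected errors decay geometrically. A geometric-series argument then bounds the total accumulated error at any horizon by $\eta(\epsilon,\alpha)/(1-\gamma)$, independent of $t$; this is the desired global bound and matches the informal claim in the abstract that the error becomes negligible over time.

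The hardest step will be the first: translating a DBSCAN radius $\epsilon$ into a uniform bound on the $rsim$-distance between the mean parfactor and every cluster member, since density-reachability allows chains of length larger than $\epsilon$ so the cluster diameter is a priori uncontrolled. Here I plan to use the ANOVA gate as the formal safeguard. Rejecting $H_0$ forces the within-cluster $MSE$ term, which is itself the grounding-weighted sum of squared $rsim$-values to the cluster mean, to be small relative to the between-cluster $MSG$, directly certifying proximity of every member to the mean. Thus the significance level $\alpha$ enters the bound $\delta(\epsilon,\alpha)$ as a second knob alongside $\epsilon$, and partitions failing the test contribute \emph{no} error because \ac{tam} discards such clusterings and leaves the corresponding parfactors untouched. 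The remainder of the argument is then a mechanical combination of the per-partition bounds (summed over at most $|\mathbf{P}|$ partitions per application) with the Boyen--Koller contraction.
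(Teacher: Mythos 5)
Your second stage is exactly the paper's proof: the paper invokes the minimal mixing rate $\gamma_Q \in\ ]0,1]$ of the stochastic transition model induced by $G_\rightarrow$, assumes \ac{tam} injects at most $\delta$ of new error per application to the forward message, and sums the geometric series $\sum_{i=0}^{\infty}\delta\cdot(1-\gamma_Q)^i = \delta/\gamma_Q$ to obtain an indefinite bound. (Your $\eta/(1-\gamma)$ is the same quantity under the convention that $\gamma$ is the contraction factor rather than the mixing rate.) The paper stops there: it does \emph{not} derive $\delta$ from $\epsilon$ and $\alpha$; it only remarks after the proof, informally, that the significance check ``influences'' $\delta$ and can be used to keep it small.

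Your first stage therefore attempts more than the paper does, and that is exactly where the argument would not go through as written. The ANOVA gate cannot serve as the safeguard you want. Rejecting $H_0$ means $F = MSG/MSE > F_{crit}$, i.e., the within-cluster variation is small \emph{relative to} the between-cluster variation; it gives no absolute bound on $MSE$ unless you separately bound $MSG$, and even an absolute bound on $MSE$ only controls a grounding-weighted \emph{sum} of squared $rsim$ values, not the distance of each individual cluster member to the mean --- a single outlying parfactor with few groundings can survive the test while lying far from the mean computed by \cref{eq:mean}. Combined with the chaining problem you already flagged (density-reachability lets the cluster diameter exceed $\epsilon$ by an uncontrolled factor), the claimed per-step bound $\delta(\epsilon,\alpha)$ with $\delta \to 0$ as $\epsilon \to 0$ is not established; the Lipschitz passage from cosine distance to total-variation distance on induced marginals is likewise asserted rather than proved and depends on the conditioning of the potentials. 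None of this sinks the theorem as stated --- $rsim$ has codomain $[0,1]$, so \emph{some} per-step bound exists trivially and the contraction argument then does all the work, which is precisely the paper's (sketchy) route --- but the stronger quantitative statement you are aiming at would need a different per-step lemma, e.g., a uniform cap on cluster diameter enforced at merge time, rather than the ANOVA test.
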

\begin{proof}\label{pf:bnd}
	A \ac{pdm} is a Markov process and $G_\rightarrow$ describes a temporal transitions model.
	Given the semantics of a \ac{pm}, $G_\rightarrow$ forms a stochastic transition model $Q$, which has a so-called minimal mixing rate $\gamma_Q \in\ ]0,1]$ \cite{boyen1998tractable}.
    $\gamma_{Q}$ is the minimal extent to which the model behaviour causes an approximation to converge to the true belief state while transitioning from one time step to the next.
	\ac{tam} approximates the belief state of the interface $\mathbf{I}_t$ and \ac{ldjt} computes the transition from $t$ to $t+1$.
	Thus, the approximation error $\delta$ is reduced by the factor $(1-\gamma_Q)$ with each transition.
    Assuming, that \ac{tam} introduces an error of at most $\delta$ for each time step, the expected error up to time step $t$ accumulates to $\delta + (1-\gamma_{Q}) \cdot \delta + ... + (1-\gamma_{Q})^{t-1} \cdot \delta = \sum_{i=0}^t \delta \cdot (1-\gamma_{Q})^i \leq \sum_{i=0}^\infty \delta \cdot (1-\gamma_{Q})^i = {\delta}/{\gamma_{Q}}$.
	For the last step, we apply the geometric series, i.e., $\sum_{i=0}^\infty \delta \cdot (1-\gamma_{Q})^i = {\delta}/1-(1-{\gamma_{Q}}) = {\delta}/{\gamma_{Q}}$ \cite{boyen1998tractable}.
	Thus, the error is indefinitely bounded by ${\delta}/{\gamma_Q}$.
\end{proof}

For \ac{tam} the significance check influences the approximation error $\delta$.
Before \ac{tam} merges \acp{pf} and thereby, approximates a belief state, \ac{tam} uses a significance check to determine the fitness of a proposed clustering.
Therefore, one can use the significance check to obtain a small $\delta$. 
Now, we prove that \ac{tam} keeps reasoning polynomial.

\begin{theorem}\label{thm:krp}
	\ac{tam} keeps reasoning polynomial.
\end{theorem}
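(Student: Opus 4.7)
The plan is to establish two facts: first, that TAMe itself runs in polynomial time in the size of its input model $G$; second, that the model produced by TAMe admits a lifted solution, which by \cite{niepert2014tractability} implies polynomial-time reasoning in the domain sizes.

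For the first fact, I will traverse \cref{alg:tam} step by step. Partitioning $G$ by logvars is linear in $|G|$. Within each partition $P$, multiplying parfactors with overlapping constraints is polynomial in $|P|$ and the range sizes of the PRVs. DBSCAN with distance $rsim$ performs $O(|P|^2)$ pairwise comparisons, each of cost linear in $|range(\mathcal{A}^i)|$, and the rest of the density-based grouping is standard. ANOVA on the resulting clusters $\mathbf{K}$ requires mean-parfactor and variance computations that are polynomial in $|\mathbf{K}|$, $|P|$, and $|range(\mathcal{A}^i)|$. Merging only replaces the clustered parfactors by their already-computed weighted means. Summing over all partitions yields a polynomial overall bound. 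Crucially, none of these costs depend on the domain sizes $|\mathcal{D}(L)|$ beyond what enters through the groundings counter $gr(\cdot)$, which itself is evaluated in polynomial time from the constraints.

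For the second fact, I will argue that after TAMe the number of parfactors sharing a given argument sequence $\mathcal{A}^i$ is bounded independently of the domain sizes, so the merged message restores a lifted representation in the sense of \cite{niepert2014tractability}. By \cref{thm:bnd}, the minimal mixing rate $\gamma_Q>0$ of the stochastic transition model $Q$ forces the potentials of parfactors originating from a single $g^i$ to converge geometrically over time. Hence, between two successive applications of TAMe, sufficiently many transitions shrink the pairwise $rsim$ distances within each behaviour class below any fixed radius $\epsilon$; DBSCAN then collapses them into a constant number of clusters, and ANOVA rejects $H_0$ precisely when such a collapse is genuinely supported. The resulting model therefore has at most a constant number of parfactors per argument sequence, so LDJT with LVE answers subsequent queries in time polynomial in $|\mathcal{D}(L)|$.

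The step I expect to be the main obstacle is the quantitative part of the second argument, namely relating the chosen radius $\epsilon$ and significance level $\alpha$ to the mixing rate $\gamma_Q$ and the rate at which evidence introduces new splits, so as to guarantee that, on an appropriate application cycle, DBSCAN does collapse behaviour classes into a domain-size-independent number of groups rather than leaving them shattered. The remaining ingredients, i.e., the polynomial cost of each subroutine of \cref{alg:tam} and the polynomial-time tractability of lifted inference on a lifted model, are either routine or already supplied by the cited results.
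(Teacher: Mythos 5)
Your proposal is correct and its core argument is essentially the paper's: the minimal mixing rate $\gamma_Q$ forces potentials of parfactors with a common origin to converge, so \ac{tam} eventually merges them back into a domain-size-independent number of groups, restoring a lifted representation on which \ac{ldjt}/\ac{lve} reasons in polynomial time. You add a routine runtime analysis of \cref{alg:tam} itself that the paper omits, and the quantitative gap you flag (relating $\epsilon$, $\alpha$, and the application cycle to $\gamma_Q$ and the rate of new splits) is equally present in the paper's own proof sketch, which simply asserts that merging happens ``at some point in time.''
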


\begin{proof}\label{pf:krp}
    Evidence introduces a discrepancy between two distributions of the same origin.
    The minimal mixing rate $\gamma_Q$ ensures that these two distributions converge again.
    Therefore, \ac{tam} will merge these two distributions at some point in time.
    Merging distributions ensures that \ac{ldjt} calculates a solution in polynomial time w.r.t.\ domains.
\end{proof}

Now, we use Thm. \ref{thm:krp} to restore an original representation.

\begin{corollary}\label{cor:true}
    Without new evidence, \ac{tam} obtains a fully lifted representation with the true belief state.
\end{corollary}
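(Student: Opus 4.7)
The plan is to build directly on Theorem 2 (\ac{krp}) and the bounded-error result of Theorem 1, then argue that in the absence of new evidence both the model shattering and the approximation error dissipate. I would first establish the structural observation that, with no new evidence arriving from time $t$ onward, no new splits can be introduced into any $m_{t'}$ for $t' \geq t$: splits originate exclusively from evidence-induced shattering of constraints, so the set of distinct constraint groups within each partition $P \in \mathbf{P}$ is non-increasing over time. Hence the cardinality of the partitioned model can only decrease (through mergings by \ac{tam}) or stay the same.

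Next, I would invoke the mixing argument used in Theorem 2. For any two parfactors $g_t^{i,j_1}$ and $g_t^{i,j_2}$ arising as splits of a common origin parfactor, the minimal mixing rate $\gamma_Q \in\ ]0,1]$ of the stochastic transition model $Q$ induced by $G_\rightarrow$ contracts the discrepancy between their potential vectors by a factor of at least $(1-\gamma_Q)$ per time step. Consequently, $rsim(\phi_t^{i,j_1}, \phi_t^{i,j_2}) \to 0$ as $t \to \infty$. This guarantees that, eventually, every pair of split parfactors in a common partition lies within the DBSCAN radius $\epsilon$ and collapses into a single cluster $K$. At that point \ac{tam} merges them via \cref{eq:mean}, and since the grounding-weighted mean of nearly identical $\phi$'s coincides (in the limit) with each individual $\phi_t^{i,j}$, the replacement parfactor carries no residual distortion.

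Iterating this reasoning over all partitions $P \in \mathbf{P}$ yields a model in which, for every $i$, all splits have been collapsed back into a single constraint spanning the original domain; the resulting structure is isomorphic to $G_\rightarrow$ prior to evidence-induced shattering, i.e., fully lifted. Concerning the belief-state claim, I would combine this with Theorem 1: the per-step approximation error $\delta$ introduced by merging is controlled by the significance level $\alpha$ together with the intra-cluster spread, and as $rsim \to 0$ within each cluster this spread vanishes, so $\delta \to 0$. Substituting into the bound $\delta / \gamma_Q$ from \cref{pf:bnd} shows that the accumulated error also tends to zero, so the belief state maintained by \ac{ldjt} converges to the true belief state of the underlying unshattered \ac{pdm}.

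The main obstacle I anticipate is making the limiting argument precise at the interface between the mixing-rate contraction (a statement about potentials) and the discrete decisions of DBSCAN and ANOVA. In particular, one must argue that the significance check does not block the merging indefinitely: as within-cluster similarity grows, either the clustering collapses to a single cluster (so the splits effectively disappear under the mean computation) or ANOVA still rejects $H_0$ once any remaining cross-cluster structure is significant relative to the shrinking within-cluster variance. Handling this dichotomy cleanly, rather than the convergence of the potentials themselves, is where the delicate step lies.
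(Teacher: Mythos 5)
Your proposal is correct and follows essentially the same route as the paper's own proof sketch: without new evidence the minimal mixing rate $\gamma_Q$ contracts all approximated distributions toward the true belief state, and since all split groups share a common origin they converge to the same limit, so \ac{tam} can eventually merge them all and restore a fully lifted representation. Your version is more detailed than the paper's (in particular the explicit $rsim \to 0$ argument and the worry about DBSCAN/ANOVA's discrete decisions blocking a merge), but the underlying idea is the same.
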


\begin{proof}\label{pf:true}
    During each transition from $t$ to $t+1$, $\gamma_Q$ ensures that approximated distributions converge to the true distribution as the distributions converge at least by the factor $(1-\gamma_Q)$.
    Thus, the approximated distributions converge to the true belief state without new evidence provided.
    Further, all groups have the same origin.
    Therefore, all groups converge to the same true belief state.
    Hence, \ac{tam} can merge all groups and thereby, again obtain a fully lifted representation at some point in time.
\end{proof}

Thus, \ac{tam} solves the \ac{krp} problem.
Since the underlying distributions of $\phi$'s converge, \ac{tam} is able to merge $\phi$'s, allowing \ac{tam} to keep reasoning polynomial.
Further, \ac{tam} introduces a bounded, unbiased, and acceptable error.

\section{Evaluation}

For the evaluation, we compare runtimes of \ac{ldjt} with and without \ac{tam} and have a look at the introduced error.
We use the model $G^{ex}$ with $|\mathcal{D}(X)| = 100$ and divide these 100 persons equally into symmetry groups, where members of each group behave identically over time.
For one time step, each symmetry group has the same evidence, but the evidence can change from one time step to the next.
To break symmetries within a group, evidence may be missing with a probability of $0.1$ for each person.
We split $D(X)$ into $2$ to $10$ symmetry groups and generate evidence for $20$ time steps.
For each symmetry group $i$, \ac{ldjt} answers $A_{t+\pi}(x_i)$ for $\pi = \{0,5,10\}$ in each time step $t$ for all $20$ time steps.

We vary $\epsilon$ and the interval $I$ of how often \ac{ldjt} applies \ac{tam}.
$\alpha$ is fixed to $0.005$. 
Based on the problem at hand, an appropriate $\alpha$ needs to be determined in advance \cite{benjamin2018redefine}.
The three options we evaluate are, from conservative to aggressive:
\begin{enumerate*}[label*=\arabic*)]
    \item $I = 5$, $\epsilon = 5 \cdot 10^{-14}$,
    \item $I = 5$, $\epsilon = 5 \cdot 10^{-2}$, and
    \item $I = 2$, $\epsilon = 5 \cdot 10^{-2}$. 
\end{enumerate*}
\ac{tam} with Option 1 mostly merges \acp{pf} that only differ in a scaling factor.
\ac{tam} with Options 2 and 3 also merges \acp{pf} that slightly differ in their ratio.
With $I = 2$, \ac{ldjt} calls \ac{tam} every other time step, and with $I = 5$ every fifth time step.

\Cref{fig:times_overall} shows runtimes of \ac{ldjt} without \ac{tam} and with \ac{tam} for the three options.
The number of symmetry groups is plotted on the x-axis.
With more symmetry groups, evidence can ground the model faster over time.
Thus, the runtimes correlate to the number of groups.
For $5$ symmetry groups, \ac{ldjt} without \ac{tam} takes about twice as long as \ac{ldjt} with \ac{tam} using the conservative option (1), answering $300$ queries for the $20$ time steps.
However, for $8$ symmetry groups, \ac{ldjt} without \ac{tam} is slightly faster.
As merging depends on evidence, which here is randomly generated, \ac{tam} may not always be able to trade off its overhead.
\ac{tam} with Option 2 merges more \acp{pf}.
Hence, every fifth time step, \ac{ldjt} answers queries on fewer groups, which are then again split up by evidence. 
With the most aggressive option (3), \ac{ldjt} applies \ac{tam} every other time step and thus answers queries on highly lifted models.

In summary, even by only merging \acp{pf} that hardly differ, \ac{tam} merges enough \acp{pf} to improve runtimes of \ac{ldjt}.
\ac{tam} with Options $2$ and $3$ improves runtimes of \ac{ldjt} significantly.
Overall, \ac{tam} is able to save runtime of \ac{ldjt} of up to $2$ orders of magnitude.
Knowing that \ac{tam} can significantly improve the performance of \ac{ldjt}, we look at the costs of the speed up, namely the introduced error.

\Cref{tab:error} shows the error in the marginals for $10$ symmetry groups for the most aggressive option, when performing filtering, $2$ time step prediction, and $4$ time step prediction for each instance and each time step.
For filtering queries, the error is already negligible and decreases for prediction queries.
Thus, the empirical evaluation underscores that \ac{tam} can keep reasoning polynomial, introducing only a negligible error.
Further, the error converges to the true belief state without new evidence as the prediction queries show.
Next, we take a look at the significance check.

\begin{figure}[t]
  \includegraphics[width=0.95\linewidth]{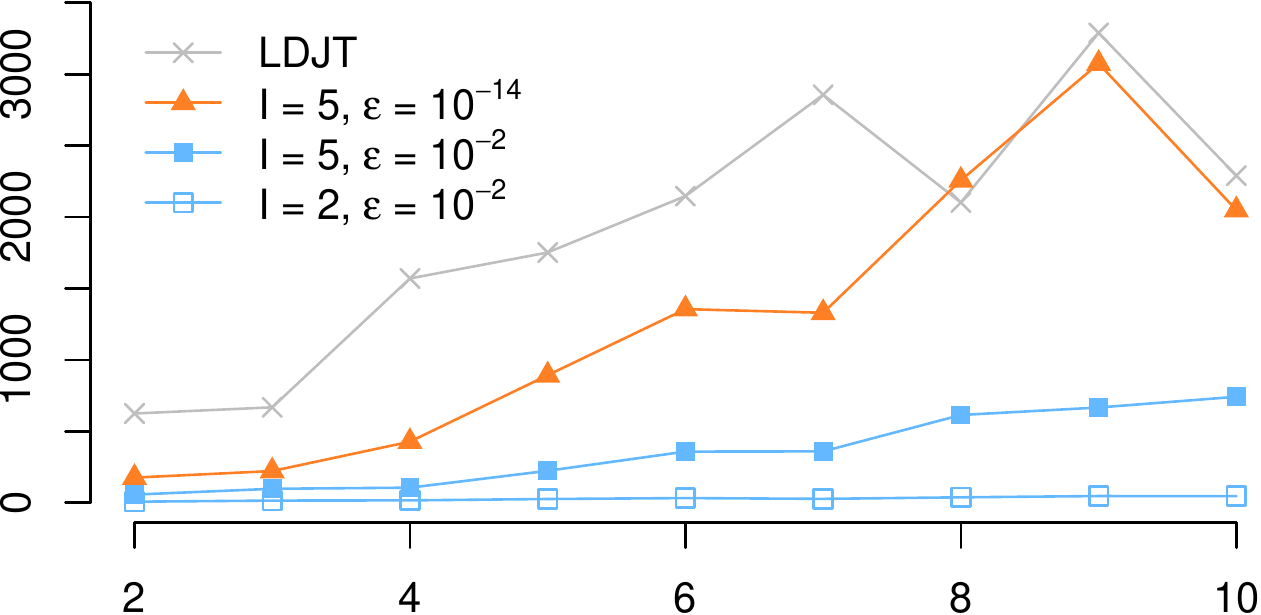}
  \caption{Runtimes [seconds], x-axis: \#symmetry groups}
  \label{fig:times_overall}	
\end{figure}

\begin{table}\centering
    \resizebox{.9995\linewidth}{!}{
    \begin{tabular}{cccc}
        \toprule
        $\pi$ & Max & Min & Average \\
        \midrule
        $0$ & 
        $0.0001537746121$   & 
        $0.0000000001720$ & 
        $0.0000191206488$\\     
        $2$ & 
        $0.0000000851654$   & 
        $0.0000000000001$  & 
        $0.0000000111949$\\        
        $4$ & 
        $0.0000000000478$& 
        $0$ & 
        $0.0000000000068$\\      
        \bottomrule     
    \end{tabular}
    }
    \caption{Introduced error; $I = 2$, $\epsilon = 5 \cdot 10^{-2}$, $10$ groups}
    \label{tab:error}
\end{table}

\begin{table}\centering
    \resizebox{.9995\linewidth}{!}{
    \begin{tabular}{cccc}
        \toprule
         & Max & Min & Average \\
        \midrule
        w & 
        $0.0002259927071$ &
        $0.0000000000000$ &
        $0.0000104567643$\\
        w/o & 
        $0.0002260554389$   & 
        $0.0000000000168$  & 
        $0.0000137870835$\\           
        \bottomrule     
    \end{tabular}
    }
    \caption{Introduced error; with and without significance test}
    \label{tab:sig}
\end{table}

To empirically evaluate the significance check, we run \ac{ldjt} with \ac{tam} on a model once with and once without the significance check.
\Cref{tab:sig} shows the introduced errors for these runs.
The maximum error hardly differs between the two runs, which is is due to the error being bounded.
Further, the minimum error is lower with the significance check as the significance check does not accept all proposed clusters. 
Discarding a clustering and thus, not following through with another approximation, the current approximation and the true belief state continue to converge based on the mixing rate.
Lastly, the average error without the significance check is around $32\%$ higher.
Even though in this case both average errors are negligible on an absolute scale, the average error on a relative scale without the significance check does increase significantly.

Overall, we show empirically that \ac{tam} does not introduce any unacceptable error due to the significance check and that \ac{tam} keeps reasoning polynomial for \ac{ldjt}.

\section{Conclusion}\label{sec:conc}
Evidence often grounds a temporal model over time. 
Consequently, inference runtimes suffer.
Thus, the idea is to use approximate symmetries to restore a lifted representation and thereby, keep reasoning polynomial by taming evidence.
To the best of our knowledge, we present the first approach solving the \ac{krp} problem for temporal relational probabilistic models, which can be used within any (exact or approximate) temporal inference algorithm.
The main idea is that instances of \acp{pf} with similar ratios between potentials behave similarly.
To merge \acp{pf}, \ac{tam} uses a message \ac{ldjt} sends between time steps as this message is smaller than the model and causes splits in the next time step. 
To identify similar instances, \ac{tam} uses density-based clustering with the cosine similarity as a distance measure, which captures similarity of potentials.
\ac{tam} applies ANOVA to the clustering result to check if the cluster means significantly discriminate the clusters. 
We show that \ac{tam} can merge \acp{pf} as their distributions converge and that \ac{tam} introduces a bounded error.
Additionally, the approximated distributions converge to the true distributions and \ac{tam} can obtain a fully lifted representation again without new evidence. 
Empirical results show that \ac{ldjt} with \ac{tam} significantly outperforms \ac{ldjt} without \ac{tam}. 
The results support our analysis that \ac{tam} retains a lifted solution, while keeping the introduced error negligible. 
Hence, \ac{ldjt} with \ac{tam} produces fast and precise results.

Future work includes how to approximate evidence \cite{van2013complexity} to cause fewer splits in temporal models as well as learning temporal models.

\subsubsection*{Acknowledgement}
This research originated from the Big Data project being part of Joint Lab 1, funded by Cisco Systems, at the centre COPICOH, University of Lübeck

\bibliographystyle{aaai}
\bibliography{bib}

\end{document}